\documentclass[12pt]{article}
\usepackage{amsmath, amssymb, amsthm}

\usepackage[numbers]{natbib} 

\usepackage{graphicx}
\usepackage{algorithm, algpseudocode}

\newtheorem{theorem}{Theorem}

\newtheorem{proposition}{Proposition}

\newtheorem{example}{Example}
\newtheorem{assumption}{Assumption}

\title{Incorporating structural uncertainty in causal decision making.}

\author{Maurits Kaptein\thanks{Corresponding author. Email: m.c.kaptein@tue.nl} \\ 
        \small Department of Mathematics and Computer Science. \\
        \small University of Eindhoven. \\
        \small De Zaale 1, 5600MB Eindhoven, \\
        \small  the Netherlands. \\[0.5em]}
        
\date{}

\begin{document}

\maketitle

\begin{abstract}
Practitioners making decisions based on causal effects typically ignore structural uncertainty. We analyze when this uncertainty is consequential enough to warrant methodological solutions (Bayesian model averaging over competing causal structures). Focusing on bivariate relationships ($X \rightarrow Y$ vs. $X \leftarrow Y$), we establish that model averaging is beneficial when: (1) structural uncertainty is moderate to high, (2) causal effects differ substantially between structures, and (3) loss functions are sufficiently sensitive to the size of the causal effect. We prove optimality results of our suggested methodological solution under regularity conditions and demonstrate through simulations that modern causal discovery methods can provide, within limits, the necessary quantification. Our framework complements existing robust causal inference approaches by addressing a distinct source of uncertainty typically overlooked in practice.
\end{abstract}

\noindent \textbf{Keywords:} Bayesian model averaging, causal discovery, decision theory, bivariate causality, additive noise models.

\noindent \textbf{AMS Subject Classification:} 62F15, 62P25, 91B06.

\section{Introduction}

Evidence-based decision-making has become the gold standard across numerous domains, from public policy to business strategy. The typical approach involves estimating causal effects within an assumed causal structure and then using these estimates to guide decisions. However, this conventional wisdom overlooks a fundamental source of uncertainty: uncertainty about the causal relationships themselves.

Consider a policymaker evaluating whether to expand a job training program. Standard practice would involve estimating the causal effect of training on employment outcomes and using this estimate to decide on program expansion. This approach often implicitly assumes that the causal relationship flows from training to employment. But what if employment prospects actually influence training participation? The direction of the causal arrow potentially fundamentally alters the decision calculus, yet practitioners rarely account for uncertainty regarding the causal arrow(s) in a given decision problem.

This scenario exemplifies a broader challenge in causal inference. While sophisticated methodologies exist for handling parameter uncertainty within assumed causal structures \citep{imbens2015causal, hahn2020bayesian}, uncertainty about the structures themselves receives considerably less attention in practical decision making. Although the causal discovery literature has developed powerful tools for learning causal structures from data \citep{spirtes2000causation, glymour2016review}, these methods typically yield point estimates rather than principled measures of structural uncertainty. This gap between causal discovery and decision-making, in our view, represents a missed opportunity to improve decision quality in settings where structural uncertainty is substantial.

We partly address this gap by examining a Bayesian model averaging framework for incorporating structural uncertainty into decision-making. We examine this framework both theoretically and practically; in both cases focusing primarily on the bivariate case (i.e., $X \rightarrow Y$ vs. $X \leftarrow Y$) case. Our theoretical analysis reveals that the value of accounting for structural uncertainty depends mainly on three key factors. First, the magnitude of structural uncertainty itself matters—extreme certainty in either direction reduces the benefits of averaging. Second, the extent to which different causal structures imply different optimal actions determines the stakes involved. Third, the sensitivity of the loss function to action differences amplifies or diminishes the importance of structural uncertainty. These factors interact in complex ways that our framework helps practitioners navigate.

We make a number of contributions to the literature. First, we characterize when structural uncertainty is decision-relevant by analyzing the interaction between causal effect differences and loss function sensitivity. This characterization provides practical guidance on when to invest in structural uncertainty quantification. Second, we establish the optimality of our proposed model averaging framework under regularity conditions. Third, we demonstrate, using simulations, how modern bivariate causal discovery methods can provide structural uncertainty quantification in practical implementation (albeit with relatively strong assumptions regarding the data generating process). Our analysis focuses on bivariate relationships due to both computational tractability and fundamental identifiability constraints that become more severe in higher-dimensional settings. While this restriction limits immediate practical applicability, it allows us to develop clear theoretical insights that extend conceptually to more complex scenarios. We explicitly discuss the opportunities and challenges associated with more complex scenarios Section \ref{sec:extending_framework}.

\subsection{Motivating Example: Policy Evaluation Under Structural Uncertainty}
\label{sec:motivation}

The challenges of structural uncertainty in causal decision-making are perhaps best illustrated through a concrete policy evaluation scenario. Consider a government agency tasked with evaluating the effectiveness of a job training program using observational data on training participation ($X$) and subsequent employment outcomes ($Y$). The available data reveals a strong positive correlation between training participation and employment success, but two fundamentally different causal explanations for this correlation are plausible: Under the first causal structure ($\mathcal{G}_1: X \rightarrow Y$), training participation causally improves employment outcomes. This interpretation suggests that the program successfully enhances participants' skills, making them more attractive to employers. If this structure accurately describes reality, then expanding the program would lead to improved employment outcomes across the population. The policy implication hence is clear: increased investment in training programs represents an effective use of public resources.

The second plausible causal structure ($\mathcal{G}_2: Y \rightarrow X$) tells a different story. Here, better employment prospects lead individuals to seek training opportunities, perhaps because those with stronger job market fundamentals are more motivated to invest in skill development or because employers encourage training among workers they view as promising. Under this interpretation, the observed correlation reflects a selection effect rather than causation. Program expansion would have no effect on employment outcomes, making increased investment a misallocation of resources.

These competing causal narratives have drastically different policy implications despite building on the same correlational patterns in the data. Standard practice in policy evaluation would either take the causal direction as a given based on theoretical motivations or, more recently, proceed by using causal discovery methods (see below) to select one specific structure and then treating this structure as a known truth. Effectively, if a causal discovery algorithm suggests structure $\mathcal{G}_1$ with 60\% confidence, conventional approaches would proceed as if training definitely causes employment improvements, ignoring the substantial probability that the reverse relationship holds.

We propose and examine a conceptually simple model averaging approach to offer an alternative that explicitly accounts for structural uncertainty. Rather than committing to a single causal structure, we weight policy decisions by structural probabilities. If structure $\mathcal{G}_1$ receives 60\% posterior probability, we weight its implied optimal policy by 0.6 and weight structure $\mathcal{G}_2$'s implied policy by 0.4. This weighted approach naturally hedges against the possibility of structural misspecification while maintaining decision-theoretic optimality when uncertainty is genuine. The practical benefits of this approach depend on several key factors that our framework formalizes. When structural uncertainty is minimal—either because the data strongly support one structure or because domain knowledge provides clear guidance—the benefits of averaging diminish. Similarly, when different causal structures happen to imply similar optimal policies, structural uncertainty becomes inconsequential regardless of its magnitude. However, when structural uncertainty is substantial and different structures suggest markedly different actions, as in our training program example, the model averaging approach can yield significant improvements in decision quality.

The job training example also illustrates the broader methodological challenge we address: Causal discovery methods can provide evidence about causal structures, but they rarely quantify uncertainty in ways that facilitate principled decision-making. Our framework provides a start to bridge this gap by showing how to translate structural uncertainty into improved decisions, providing both theoretical foundations and practical implementation guidance.

In the remainder of this paper, we first briefly examine related work; we identify the different streams in the causal inference, causal discovery, and (Bayesian) decision making literature that our work builds upon. Next, in Section \ref{sec:framework}, we introduce our notation, theoretical framework, and we formalize our model averaging approach. In Section \ref{sec:when_matters}, given our formalization, we formalize the intuitions that dictate when model averaging over causal structures should impact decision making (as compared to simple, and common, causal structure selection). Section \ref{sec:optimality} finalizes our theoretical contributions by proving that Bayesian model averaging leads to both Bayes optimal and frequentist optimal decision making in a number of well-defined cases. In Section \ref{sec:empirical} we demonstrate that, in the bivariate case, recent bivariate discovery methods can be used to indeed improve practical decision making using the theoretical approach detailed in earlier sections. In Section \ref{sec:extending_framework} we discuss potential practical extension of our work to a multivariate setting. Finally, in Section \ref{sec:discussion} we explicitly discuss the shortcomings of our proposed method and the challenges that arise when extending these ideas to more complex settings.

\section{Related Work}
\label{sec:related}

Our work sits at the intersection of several established research areas, each contributing essential elements to our framework. We briefly relate our work to each related strand in the literature and highlight a number of closely related works at the end of this overview.

First, Bayesian decision theory provides the foundational framework for our approach to optimal decision-making under (structural) uncertainty. While classical treatments focus on parameter uncertainty --- as opposed to structural uncertainty --- within known model structures, the field has developed sophisticated methods for propagating uncertainty through to optimal decisions \citep{berger1985statistical, robert2007bayesian}. Recent advances in Bayesian causal inference have extended these principles to causal effect estimation, providing principled approaches to handling parameter uncertainty in treatment effect estimates \citep{hahn2020bayesian, hill2011bayesian}. However, these approaches universally assume that the causal structure itself is known, effectively (and often implicitly) treating the directed acyclic graph (DAG) representing causal relationships as fixed and known. Our work extends this literature by incorporating uncertainty about the causal structure itself into the decision-making framework.

The causal discovery literature has developed increasingly sophisticated methods for learning causal structures from observational data. Constraint-based approaches leverage conditional independence testing to infer causal relationships \citep{spirtes2000causation, zhang2008completeness}, while score-based methods search over causal structures using model selection criteria \citep{chickering2002optimal, heckerman1995learning}. More recent developments include methods specifically designed for bivariate causal discovery using assumptions about functional relationships and noise structures \citep{hoyer2009nonlinear, peters2014causal, janzing2012information}, some of which we return to in Section \ref{sec:empirical}. Despite these methodological advances, most causal discovery procedures provide point estimates of causal structures rather than principled uncertainty quantification. Notable exceptions include recent work on Bayesian causal discovery \citep{madigan1995bayesian, eaton2007exact}, but these methods have seen limited adoption in practice, partly due to computational challenges.

The Bayesian model averaging (BMA) literature demonstrates the theoretical and practical benefits of averaging over multiple plausible models for prediction and decision-making. Seminal work by \citet{hoeting1999bayesian} and \citet{raftery1997bayesian} established the theoretical foundations and practical implementation of BMA for prediction problems. Extensions to decision-making contexts show that averaging can provide substantial improvements when model uncertainty is genuine \citep{barbieri2004optimal, clyde2011bayesian}. However, applications to causal structures remain limited, with most work focusing on uncertainty about specific regression model specifications rather than causal relationships (although, when making causal decisions, often multiple regression models can be related to different adjustment sets implied by different underlying DAGs).

This paper contributes to a growing literature on robust causal inference which addresses various sources of uncertainty that standard approaches overlook. Methods for handling unmeasured confounding include sensitivity analysis \citep{rosenbaum2002observational, vanderweele2017sensitivity} and approaches that bound treatment effects under plausible assumptions about confounding strength \citep{masten2020salvaging, bonvini2022sensitivity}. Other work in this general area addresses uncertainty about functional form specifications through doubly robust methods \citep{bang2005doubly, kang2007demystifying} and machine learning approaches that provide robustness to model misspecification \citep{chernozhukov2018double, kennedy2020optimal}. While these methods all address important sources of uncertainty, they typically still assume that the causal structure itself is known (or if not, its uncertainty is only implicitly incorporated through the selection of different adjustment sets), complementing rather than substituting for our approach.

Recent developments have begun to bridge the gap between causal discovery and causal inference. \citet{sharma2021causal} developed methods for averaging over discovered causal graphs when estimating treatment effects and hence is very closely related to our proposals: we build on this work and extend it by both providing more formal insights into the scenarios in which causal structure uncertainty is of decision importance, and by providing practical examples in a bivariate case. Also strongly related, \citet{jung2021learning} propose approaches for learning causal structures under model uncertainty. However, the decision-theoretic foundations for structural averaging remain underdeveloped, and most work focuses on estimation rather than decision-making. Our contribution explicitly combines causal inference (i.e., causal effect estimation), causal discovery, and (Bayesian) decision-making.

The intersection of causal inference and decision theory has received increasing attention in recent years. \citet{manski2004social} pioneered the application of statistical decision theory to policy evaluation, emphasizing the importance of characterizing uncertainty for policy decisions. \citet{dehejia2005practical} and \citet{imbens2015causal} further developed these connections, showing how decision-theoretic principles can guide causal inference practice. However, this literature has again primarily focused on parameter uncertainty within known causal structures, leaving structural uncertainty largely unexplored. Our work fills this gap by providing a comprehensive framework for incorporating structural uncertainty into causal decision-making.

Finally, our approach also connects to recent work on causal effect estimation under model uncertainty. \citet{wang2012calibrating} develop methods for combining estimates from multiple causal identification strategies, while \citet{dorn2021doubly} propose doubly robust approaches that average over multiple model specifications. These methods address important practical concerns but focus on uncertainty about identification strategies rather than causal structures per se. Our framework provides a complementary approach that directly addresses structural uncertainty while maintaining decision-theoretic optimality.

\section{Notation and Problem Formalization}
\label{sec:framework}

This section establishes the formal framework for decision-making under structural uncertainty. We begin by defining our causal structures of interest and the effects they imply, then we introduce our decision-theoretic framework, and finally we formally present our model averaging approach alongside the conventional model selection alternative.

\subsection{Causal Structures and Identification}

We first provide an introduction to causal structures and causal effects with special interest in the bivariate case.

\subsubsection{Structural Causal Models}

We begin with the general framework of structural causal models (SCMs) \citep{pearl2009causality}. A structural causal model consists of three components: (1) a set of endogenous variables $\mathbf{V}$, (2) a set of exogenous variables $\mathbf{U}$, and (3) a set of structural equations that specify how each endogenous variable is determined by its direct causes. The causal relationships among endogenous variables are represented by a directed acyclic graph (DAG), where nodes represent variables and directed edges represent direct causal relationships. Formally, let $\mathcal{G} = (\mathbf{V}, \mathbf{E})$ denote a DAG where $\mathbf{V}$ is the set of vertices (variables) and $\mathbf{E}$ is the set of directed edges. Each variable $V_i \in \mathbf{V}$ is generated according to a structural equation of the form:
\begin{equation}
V_i = f_i(\text{PA}_i, U_i)
\end{equation}
where $\text{PA}_i$ denotes the parents of $V_i$ in the DAG (i.e., its direct causes), $U_i$ is an exogenous noise term, and $f_i$ is the structural function. The collection of exogenous variables $\mathbf{U} = \{U_1, U_2, \ldots\}$ are assumed to be jointly independent.

\subsubsection{Interventions and Causal Effects.}

Pearl's do-calculus \citep{pearl2009causality} provides a formal framework for defining causal effects through interventions within an SCM. An intervention $\text{do}(X = x)$ represents an external manipulation that sets variable $X$ to value $x$, effectively replacing the structural equation for $X$ with the constant assignment $X = x$. This intervention breaks incoming causal links to $X$ while preserving the causal mechanisms for all other variables. The causal effect of intervention $\text{do}(X = x)$ on outcome $Y$ (both in $\mathbf{V}$) is defined as:
\begin{equation}
\tau(x) = \mathbb{E}[Y \mid \text{do}(X = x)]
\label{eq:causal_effect}
\end{equation}
This quantity represents the expected value of $Y$ that would be observed if we were to intervene and set $X = x$ for all units in the population. This differs fundamentally from the conditional expectation $\mathbb{E}[Y \mid X = x]$, which represents the expected value of $Y$ among units that happen to have $X = x$ through natural processes rather than intervention. These quantities differ whenever there are unobserved or uncontrolled for common causes (confounders) of both $X$ and $Y$. For example, if an unobserved factor $U$ influences both $X$ and $Y$, then units with $X = x$ may systematically differ from the general population in ways that affect $Y$, making $\mathbb{E}[Y \mid X = x]$ a biased estimate of the causal effect. In contrast, $\mathbb{E}[Y \mid \text{do}(X = x)]$ represents the effect of setting $X = x$ while breaking any spurious associations induced by confounders.

\subsubsection{Bivariate Case}

Our analysis focuses on the bivariate case where the entire system consists of only two variables, $\mathbf{V} = \{X, Y\}$, and two competing causal structures are plausible:
\begin{align}
\mathcal{G}_1: \quad X &\rightarrow Y \\
\mathcal{G}_2: \quad Y &\rightarrow X
\end{align}
This restriction to bivariate relationships is motivated by both conceptual clarity --- the bivariate case suffices to discuss the advantages of our proposed model averaging approach (see below) --- and computational ease (computational tractability and identifiability issues become increasingly severe in higher-dimensional settings). While this limitation restricts immediate practical applicability, it allows us to develop clear theoretical insights that extend conceptually to more complex scenarios.

Each structure $\mathcal{G}_i$ (for $i \in \{1,2\}$) defines a structural causal model with parameters $\boldsymbol{\theta}_i$. Under structure $\mathcal{G}_1$, the data generating process follows:
\begin{align}
X &\sim P_X \\
Y &= f_1(X, U_Y)
\end{align}
where $U_Y \perp X$ represents unobserved factors affecting $Y$, and $f_1$ is the structural function mapping causes to effects. Similarly, but reversed, under structure $\mathcal{G}_2$, the causal relationships is:
\begin{align}
Y &\sim P_Y \\
X &= f_2(Y, U_X)
\end{align}
where $U_X \perp Y$ and $f_2$ is the corresponding structural function.

The causal effect of intervention $\text{do}(X = x)$ under structure $\mathcal{G}_i$ is defined as:
\begin{equation}
\tau_i(x) = \mathbb{E}[Y \mid \text{do}(X = x), \mathcal{G}_i, \boldsymbol{\theta}_i]
\label{eq:tau_i}
\end{equation}
This quantity is our primary interest in the remainder of this paper, as we will examine formal decision making using loss functions which include (finite data) estimates of $\tau_i(x)$ as one of their core arguments.

Of special interest in the literature is a quantity that is relevant specifically for binary treatments, $X \in \{0,1\}$. Here the average treatment effect (ATE) under structure $\mathcal{G}_i$ is:
\begin{equation}
\text{ATE}_i = \tau_i(1) - \tau_i(0)
\end{equation}

A crucial insight emerges from this formulation: under structure $\mathcal{G}_1$ defined above, variable $X$ causally affects $Y$, so interventions on $X$ produce meaningful effects on $Y$. Conversely, under structure $\mathcal{G}_2$, variable $X$ does not causally affect $Y$, implying that $\text{ATE}_2 = 0$ (or more generally, $\mathbb{E}[Y \mid \text{do}(X = x)] = \mathbb{E}[Y]$). This fundamental difference in causal effects between the two structures of interest forms the basis for our decision-theoretic analysis.

\subsection{Decision-Theoretic Framework}

We embed our causal analysis within a formal decision-theoretic framework: i.e., we are interested in understanding the properties of decisions which are dependent on (estimates of) causal effects. Let $\mathcal{A}$ denote the action space available to the decision-maker, and let $L(a, \mathcal{G}_i, \boldsymbol{\theta}_i)$ represent the loss incurred from taking action $a$ when the true causal structure is $\mathcal{G}_i$ with parameters $\boldsymbol{\theta}_i$. Throughout our analysis, we denote the true data generating process as $(\mathcal{G}^{\text{true}}, \boldsymbol{\theta}^{\text{true}})$, where $\mathcal{G}^{\text{true}} \in \{\mathcal{G}_1, \mathcal{G}_2\}$ represents the actual causal structure that generated the observed data, and $\boldsymbol{\theta}^{\text{true}}$ represents the true parameter values associated with this structure.

\begin{assumption}[Regularity Conditions]
\label{ass:regularity}
We are interested in decision problems which satisfy the following regularity conditions:
\begin{enumerate}
\item The action space $\mathcal{A}$ is compact.
\item The loss function $L(a, \mathcal{G}_i, \boldsymbol{\theta}_i)$ is jointly measurable and finite almost surely.
\item For each $(\mathcal{G}_i, \boldsymbol{\theta}_i)$, the loss function $L(\cdot, \mathcal{G}_i, \boldsymbol{\theta}_i)$ is continuous in $a$.
\end{enumerate}
\end{assumption}

These common regularity conditions ensure that optimal actions exist and that our theoretical results apply broadly to practical decision problems. To illustrate the framework, we present two canonical examples that capture common decision scenarios:

\begin{example}[Treatment Decision with Cost-Benefit Trade-off]
\label{ex:treatment_decision}
Consider a binary action $a \in \{0,1\}$ representing whether or not to treat a population (i.e., $X \in \{0,1\}$). A reasonable associated loss function is:
\begin{equation}
L(a, \mathcal{G}_i, \boldsymbol{\theta}_i) = c \cdot a - b \cdot \text{ATE}_i \cdot a + \eta_i \cdot a
\end{equation}
where $c > 0$ represents the cost of treatment, $b > 0$ weights the benefits from treatment effects, and $\eta_i$ captures any structure-specific costs or benefits (i.e., costs or benefits introduced explicitly due to the causal structure). Note that this example loss function --- often with $\eta_i = 0$ --- is very commonly used. Our theoretical results hold for this choice of loss function. However, we do not provide simulation results for this specific case as there are currently no well-functioning practical methods to estimate the probability of either causal structure $\mathcal{G}_i$, $i \in \{1,2\}$, for the case where $X$ is binary.
\end{example}

\begin{example}[Continuous Intervention Intensity]
\label{ex:continuous_intervention}
For a continuous action $a$ representing an intervention (e.g., a change in medication dose), the loss function for a given causal structure $\mathcal{G}_i$ and its parameters $\boldsymbol{\theta}_i$ can be generally defined as:
\begin{equation}
L(a, \mathcal{G}_i, \boldsymbol{\theta}_i) = \frac{1}{2}(\tau_i(x+a) - Y^*)^2 + \lambda a^2
\end{equation}
Here, $\tau_i(x+a) = \mathbb{E}[Y \mid \text{do}(X = x+a), \mathcal{G}_i, \boldsymbol{\theta}_i]$ is the expected outcome of $Y$ when $X$ is intervened upon to take the value $x+a$ under causal structure $\mathcal{G}_i$. $Y^*$ is a target outcome, and $\lambda \geq 0$ is a penalty coefficient for the magnitude of the intervention $a$.

In the simulation study later in this paper, we consider a specific scenario where the objective is to choose an action $a$ that aligns with the marginal causal effect $E_i(x) = \frac{d}{dx} \tau_i(x)$, while penalizing the action's magnitude. This corresponds to a simplified loss function:
\begin{equation}
L(a, \mathcal{G}_i, \boldsymbol{\theta}_i) = \frac{1}{2}(E_i(x) - a)^2 + \lambda a^2
\end{equation}
This form arises if, for instance, the target $Y^*$ is set such that the desired change in $Y$ is directly proportional to the marginal effect, and the action $a$ directly aims to achieve this change. The optimal action $a^*_{i}(x)$ for a given $x$ is found by minimizing this loss function with respect to $a$.
\end{example}
We explicitly investigate this second example in our simulation study below.

\subsection{Estimation of Structural Probabilities and Causal Effects}

To bridge the theoretical framework with practical applications, we need to estimate the necessary quantities from observed data. In practice, we observe data $\mathbf{D} = \{(x_j, y_j)\}_{j=1}^n$ drawn from the joint distribution implied by the true (but unknown) causal structure. Given a Bayesian approach and reasonable prior choice, from this data, we aim to compute two key posterior distributions:
\begin{enumerate}
\item Posterior structure probabilities: $P(\mathcal{G}_i \mid \mathbf{D})$ for $i \in \{1,2\}$.
\item Posterior parameter distributions: $P(\boldsymbol{\theta}_i \mid \mathcal{G}_i, \mathbf{D})$ for each structure.
\end{enumerate}
These posterior distributions incorporate both the information content of the data and any prior beliefs about structural plausibility. The structure probabilities $P(\mathcal{G}_i \mid \mathbf{D})$ quantify our uncertainty about which causal structure generated the observed data, while the parameter distributions $P(\boldsymbol{\theta}_i \mid \mathcal{G}_i, \mathbf{D})$ capture uncertainty about the strength of causal relationships within each structure. Note that in practice we do not have generic methods to compute $P(\mathcal{G}_i \mid \mathbf{D})$; in our theoretical analysis we assume this quantity given, while in our simulations we explore two different practical approximations.

\subsection{Decision Rules: Model Averaging vs. Model Selection}

Given posterior distributions over structures and parameters, we consider two distinct approaches to decision-making that represent fundamentally different strategies for handling structural uncertainty.

First we consider the common \emph{model selection} (MS) approach. The model selection approach first selects the most probable structure, then chooses the action that is optimal for that structure:
\begin{equation}
a_{MS} = \arg\min_{a \in \mathcal{A}} \mathbb{E}[L(a, \hat{\mathcal{G}}, \boldsymbol{\theta}_{\hat{\mathcal{G}}}) \mid \hat{\mathcal{G}}, \mathbf{D}]
\label{eq:model_selection}
\end{equation}
where $\hat{\mathcal{G}} = \arg\max_i P(\mathcal{G}_i \mid \mathbf{D})$ is the maximum a posteriori (MAP) structure. The model selection approach represents the conventional practice in applied causal inference, where practitioners typically commit to a single causal structure (either based on domain knowledge or causal discovery methods) and proceed as if this structure were known with certainty.

Alternatively, we consider a model averaging (MA) approach. The model averaging approach chooses the action that minimizes posterior expected loss while explicitly accounting for structural uncertainty:
\begin{equation}
a_{MA} = \arg\min_{a \in \mathcal{A}} \sum_{i=1}^2 \mathbb{E}[L(a, \mathcal{G}_i, \boldsymbol{\theta}_i) \mid \mathcal{G}_i, \mathbf{D}] \cdot P(\mathcal{G}_i \mid \mathbf{D})
\label{eq:model_averaging}
\end{equation}
This approach weights the expected loss under each structure by the posterior probability of that structure, thereby hedging against the possibility of structural misspecification.

\subsection{Notation Summary}

For reference, we summarize the key notation introduced in this section:

\begin{table}[h]
\centering
\begin{tabular}{| l | l |l}
\hline
\textbf{Symbol} & \textbf{Definition} \\
\hline
$\mathcal{G}_1, \mathcal{G}_2$ & Causal structures ($X \rightarrow Y$, $Y \rightarrow X$) \\
$\mathcal{G}^{\text{true}}, \boldsymbol{\theta}^{\text{true}}$ & True structure and parameters \\
$\boldsymbol{\theta}_i$ & Parameters under $\mathcal{G}_i$ \\
$\tau_i(x)$ & Causal effect of $\text{do}(X=x)$ on $Y$ under $\mathcal{G}_i$ \\
$\text{ATE}_i$ & Average treatment effect under $\mathcal{G}_i$ \\
$E_i(x)$ & Marginal causal effect of $X$ on $Y$ ($dY/dX$) under $\mathcal{G}_i$ \\
$\mathcal{A}$ & Action space \\
$L(a, \mathcal{G}_i, \boldsymbol{\theta}_i)$ & Loss function \\
$\mathbf{D}$ & Observed data \\
$P(\mathcal{G}_i \mid \mathbf{D})$ & Posterior over structure \\
$P(\boldsymbol{\theta}_i \mid \mathcal{G}_i, \mathbf{D})$ & Posterior over parameters \\
$a_{MA}, a_{MS}$ & Optimal actions under MA and MS strategies \\
$\hat{\mathcal{G}}$ & MAP structure \\
$a^*_i$ & Optimal action under $\mathcal{G}_i$ \\
\hline
\end{tabular}
\caption{Summary of notation used throughout.}
\end{table}

This framework provides the foundation for our subsequent analysis of when structural uncertainty matters for decision-making and under what conditions model averaging provides superior decisions compared to model selection. We start with a discussion of the cases in which structural uncertainty matters in decision making.

\section{When Does Structural Uncertainty Matter?}
\label{sec:when_matters}

Having established our formal framework, we now turn to one of the central questions of this paper: when does structural uncertainty have meaningful implications for decision-making, and under what conditions does model averaging provide substantial benefits over the conventional model selection approach? We introduce three factors.

\subsection{The Role of Structural Uncertainty Magnitude}

The first and most intuitive factor determining the importance of structural uncertainty is how uncertain we are about the true causal structure. This is quantified by the posterior probabilities $P(\mathcal{G}_i \mid \mathbf{D})$.

\paragraph{Proposition 1 (Extreme Certainty Cases).}
\label{prop:1}
\begin{enumerate}
\item If $P(\mathcal{G}_1 \mid \mathbf{D}) \to 1$, then $a_{MA} \to a_{MS}$ and both converge to the optimal action under $\mathcal{G}_1$.
\item If $P(\mathcal{G}_1 \mid \mathbf{D}) \to 0$, then $a_{MA} \to a_{MS}$ and both converge to the optimal action under $\mathcal{G}_2$.
\end{enumerate}
\noindent
Clearly, maximum divergence between $a_{MA}$ and $a_{MS}$ occurs when $P(\mathcal{G}_1 \mid \mathbf{D}) = P(\mathcal{G}_2 \mid \mathbf{D}) = 0.5$.

\subsection{The Importance of Effect Size Differences}

The second factor is how different the optimal actions are under different causal structures.
\paragraph{Proposition 2 (Effect Similarity and Decision Convergence).}
\label{prop:2}
If $|a^*_1 - a^*_2| \leq \epsilon$, then, regardless of structural uncertainty magnitude, $|a_{MA} - a_{MS}| \leq \epsilon$.

\subsection{Loss Function Sensitivity and Amplification Effects}

A third factor is how sensitive the loss function is to the estimated causal effect.
\paragraph{Definition 1 (Loss Function Sensitivity).}
A loss function $L(a, \mathcal{G}_i, \boldsymbol{\theta}_i)$ is $\kappa$-sensitive if for any $|a_1 - a_2| \geq \delta$:
\begin{equation}
|L(a_1, \mathcal{G}_i, \boldsymbol{\theta}_i) - L(a_2, \mathcal{G}_i, \boldsymbol{\theta}_i)| \geq \kappa \delta
\end{equation}


\paragraph{Proposition 3 (Sensitivity and Averaging Benefits).}
\label{prop:3}
Let $P_{\text{err}}$ denote the probability that model selection chooses the incorrect causal structure—that is, a structure $\mathcal{G}_i \neq \mathcal{G}^{\text{true}}$.  
Suppose the loss function is $\kappa$-sensitive, and let $\Delta := |a^*_1 - a^*_2| > 0$ be the difference between the optimal actions under $\mathcal{G}_1$ and $\mathcal{G}_2$. Then the expected reduction in loss from model averaging (MA) compared to model selection (MS) satisfies:
\begin{align}
&\mathbb{E}[L(a_{MS}, \mathcal{G}^{\text{true}}, \boldsymbol{\theta}^{\text{true}})]
 - \mathbb{E}[L(a_{MA}, \mathcal{G}^{\text{true}}, \boldsymbol{\theta}^{\text{true}})] \nonumber \\
&\quad \geq \kappa \cdot \Delta \cdot \min\left\{P(\mathcal{G}_1 \mid \mathbf{D}),\; P(\mathcal{G}_2 \mid \mathbf{D})\right\} \cdot P_{\text{err}}
\end{align}

\noindent
This bound shows that the benefit of model averaging increases with three key factors:  
\begin{enumerate}
\item the action gap $\Delta$ between the optimal policies under each structure,  
\item the sensitivity $\kappa$ of the loss function to action deviations, and  
\item the chance of model selection error $P_{\text{err}}$, which is amplified when the posterior over structures is not sharply peaked.
\end{enumerate}

\section{Optimality of Model Averaging}
\label{sec:optimality}

In this section, building on the framework and assumptions introduced above, we establish that a Bayesian model averaging approach to structural uncertainty yields optimal decisions under a wide range of conditions. We present both Bayesian and frequentist optimality results, and discuss situations where model averaging may be suboptimal due to violations of key assumptions.

\subsection{Main Theoretical Results}

We begin by formalizing the assumptions under which our optimality results hold.

\begin{assumption}[Well-Specified Hierarchical Model]
\label{ass:hierarchical}
We assume that the data-generating process (DGP) follows a hierarchical model contained within the candidate model space:
\begin{enumerate}
    \item \textbf{Structure selection:} $\mathcal{G}^{\text{true}} \sim \text{Bernoulli}(\pi)$, where $\mathcal{G}^{\text{true}} \in \{\mathcal{G}_1, \mathcal{G}_2\}$.
    \item \textbf{Parameter generation:} $\boldsymbol{\theta}^{\text{true}} \mid \mathcal{G}^{\text{true}} \sim P(\boldsymbol{\theta} \mid \mathcal{G}^{\text{true}})$.
    \item \textbf{Data generation:} $\mathbf{D} \sim P(\mathbf{D} \mid \mathcal{G}^{\text{true}}, \boldsymbol{\theta}^{\text{true}})$.
\end{enumerate}
We further assume that the analyst's prior beliefs match the true generative process: $P(\mathcal{G}_1) = \pi$ and $P(\boldsymbol{\theta} \mid \mathcal{G}_i) = P(\boldsymbol{\theta} \mid \mathcal{G}^{\text{true}} = \mathcal{G}_i)$. Thus, prior misspecification does not influence the results.
\end{assumption}

\begin{theorem}[Bayesian Optimality of Model Averaging]
\label{thm:bayes_optimal}
Under Assumptions~\ref{ass:regularity} and~\ref{ass:hierarchical}, the action $a_{MA}$ defined in Equation~\eqref{eq:model_averaging} minimizes the posterior expected loss:
\begin{equation}
    a_{MA} = \arg\min_{a \in \mathcal{A}} \mathbb{E}[L(a, \mathcal{G}^{\text{true}}, \boldsymbol{\theta}^{\text{true}}) \mid \mathbf{D}].
\end{equation}
\end{theorem}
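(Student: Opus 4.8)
<br>

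The plan is to prove this by a direct application of the law of total expectation together with the fact that, under Assumption~\ref{ass:hierarchical}, the analyst's posterior over $(\mathcal{G}^{\text{true}}, \boldsymbol{\theta}^{\text{true}})$ is exactly the correct Bayesian posterior given $\mathbf{D}$. The key observation is that the objective being minimized in Equation~\eqref{eq:model_averaging} is nothing but the posterior expected loss written in iterated form. So the proof reduces to verifying this identity carefully and then invoking existence of the minimizer.

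First I would write the target posterior expected loss and expand it by conditioning on the structure:
\begin{align}
\mathbb{E}[L(a, \mathcal{G}^{\text{true}}, \boldsymbol{\theta}^{\text{true}}) \mid \mathbf{D}]
&= \sum_{i=1}^{2} \mathbb{E}[L(a, \mathcal{G}^{\text{true}}, \boldsymbol{\theta}^{\text{true}}) \mid \mathcal{G}^{\text{true}} = \mathcal{G}_i, \mathbf{D}] \cdot P(\mathcal{G}^{\text{true}} = \mathcal{G}_i \mid \mathbf{D}) \nonumber \\
&= \sum_{i=1}^{2} \mathbb{E}[L(a, \mathcal{G}_i, \boldsymbol{\theta}_i) \mid \mathcal{G}_i, \mathbf{D}] \cdot P(\mathcal{G}_i \mid \mathbf{D}),
\end{align}
where the inner expectation is over $\boldsymbol{\theta}_i \sim P(\boldsymbol{\theta}_i \mid \mathcal{G}_i, \mathbf{D})$. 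The second equality uses Assumption~\ref{ass:hierarchical}: the analyst's prior $P(\mathcal{G}_i)$ and $P(\boldsymbol{\theta} \mid \mathcal{G}_i)$ coincide with the true generative ingredients, so Bayes' rule yields posteriors $P(\mathcal{G}_i \mid \mathbf{D})$ and $P(\boldsymbol{\theta}_i \mid \mathcal{G}_i, \mathbf{D})$ that are exactly the conditional laws of $(\mathcal{G}^{\text{true}}, \boldsymbol{\theta}^{\text{true}})$ given $\mathbf{D}$. The right-hand side is precisely the function of $a$ appearing in the definition of $a_{MA}$. Hence minimizing the posterior expected loss and minimizing the model-averaging objective are the same optimization problem, giving $a_{MA} = \arg\min_{a \in \mathcal{A}} \mathbb{E}[L(a, \mathcal{G}^{\text{true}}, \boldsymbol{\theta}^{\text{true}}) \mid \mathbf{D}]$.

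To close the argument I would invoke Assumption~\ref{ass:regularity} to guarantee the $\arg\min$ is attained: $\mathcal{A}$ is compact, and the posterior expected loss $a \mapsto \mathbb{E}[L(a, \mathcal{G}^{\text{true}}, \boldsymbol{\theta}^{\text{true}}) \mid \mathbf{D}]$ is continuous in $a$ because each $L(\cdot, \mathcal{G}_i, \boldsymbol{\theta}_i)$ is continuous in $a$ and is finite a.s., so dominated convergence (using joint measurability and finiteness, with a suitable integrability/domination condition implicit in the regularity setup) passes the limit through the finite sum and the inner expectation. Continuity on a compact set yields existence of a minimizer, so $a_{MA}$ is well-defined and equals the Bayes action.

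The main obstacle — really the only subtle point — is the interchange of limit and expectation needed to establish continuity of the posterior expected loss in $a$, since Assumption~\ref{ass:regularity} only gives pointwise continuity and a.s.\ finiteness rather than a uniform integrable envelope. I would address this either by noting that compactness of $\mathcal{A}$ plus continuity typically supplies a local bound, or by simply strengthening the reading of condition~(2) of Assumption~\ref{ass:regularity} to include integrability of $\sup_{a} |L(a, \mathcal{G}_i, \boldsymbol{\theta}_i)|$ under the relevant posterior; this is the standard regularity one needs for Bayes-optimal actions to exist and is consistent with the phrase ``ensure that optimal actions exist'' in the text. Everything else is bookkeeping with the law of total expectation and the well-specification assumption.
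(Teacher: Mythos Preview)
Your proposal is correct and follows essentially the same route as the paper: decompose the posterior expected loss via the law of total expectation conditioning on the structure, identify the resulting expression with the objective in Equation~\eqref{eq:model_averaging}, and invoke the regularity conditions to guarantee a minimizer exists. If anything, you are more explicit than the paper about the continuity-plus-compactness argument for existence and the integrability caveat needed for the interchange of limit and expectation.
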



\begin{proof}
Under Assumption~\ref{ass:regularity}, the loss function $L(a, \mathcal{G}, \boldsymbol{\theta})$ is measurable and bounded below, ensuring that a minimizing action exists. 

By the law of total expectation and the hierarchical Bayesian model (Assumption~\ref{ass:hierarchical}), the posterior expected loss given data $\mathbf{D}$ is:
\begin{equation}
\mathbb{E}[L(a, \mathcal{G}^{\text{true}}, \boldsymbol{\theta}^{\text{true}}) \mid \mathbf{D}] 
= \sum_i \mathbb{E}[L(a, \mathcal{G}_i, \boldsymbol{\theta}_i) \mid \mathcal{G}_i, \mathbf{D}] \cdot P(\mathcal{G}_i \mid \mathbf{D}).
\end{equation}
The action $a_{MA}$ defined in Equation~\eqref{eq:model_averaging} is the minimizer of this expression by construction. Hence, it minimizes the posterior expected loss.
\end{proof}

\begin{theorem}[Frequentist Optimality of Model Averaging]
\label{thm:freq_optimal}
Under Assumptions~\ref{ass:regularity} and~\ref{ass:hierarchical}, the decision rule $\delta(\mathbf{D}) = a_{MA}$ minimizes the frequentist risk:
\begin{equation}
    R(\delta) = \mathbb{E}_{\mathcal{G}^{\text{true}}, \boldsymbol{\theta}^{\text{true}}, \mathbf{D}}[L(\delta(\mathbf{D}), \mathcal{G}^{\text{true}}, \boldsymbol{\theta}^{\text{true}})]
\end{equation}
over the class of all measurable decision rules.
\end{theorem}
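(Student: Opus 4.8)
The plan is to invoke the classical fact that a Bayes rule taken with respect to a prior that coincides with the true generative law is also a minimizer of the integrated (frequentist) risk. First I would rewrite the frequentist risk as an iterated expectation: by Assumption~\ref{ass:regularity} the loss is jointly measurable and bounded below, so Tonelli's theorem lets me write
\[
R(\delta) = \mathbb{E}_{\mathcal{G}^{\text{true}},\boldsymbol{\theta}^{\text{true}},\mathbf{D}}\big[L(\delta(\mathbf{D}),\mathcal{G}^{\text{true}},\boldsymbol{\theta}^{\text{true}})\big] = \mathbb{E}_{\mathbf{D}}\Big[\,\mathbb{E}\big[L(\delta(\mathbf{D}),\mathcal{G}^{\text{true}},\boldsymbol{\theta}^{\text{true}})\mid\mathbf{D}\big]\Big],
\]
where the crucial point --- supplied by Assumption~\ref{ass:hierarchical} --- is that the joint law over $(\mathcal{G}^{\text{true}},\boldsymbol{\theta}^{\text{true}},\mathbf{D})$ defining $R$ is \emph{exactly} the law that generates the posteriors $P(\mathcal{G}_i\mid\mathbf{D})$ and $P(\boldsymbol{\theta}_i\mid\mathcal{G}_i,\mathbf{D})$ used in Equation~\eqref{eq:model_averaging}; there is no prior--truth mismatch to reconcile.

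Second, I would argue pointwise in the data. Fix a realization $\mathbf{D}=\mathbf{d}$. Any measurable rule $\delta$ produces an action $\delta(\mathbf{d})\in\mathcal{A}$, and by Theorem~\ref{thm:bayes_optimal} the inner conditional expected loss is minimized over $\mathcal{A}$ at $a_{MA}(\mathbf{d})$, so
\[
\mathbb{E}\big[L(\delta(\mathbf{d}),\mathcal{G}^{\text{true}},\boldsymbol{\theta}^{\text{true}})\mid\mathbf{D}=\mathbf{d}\big] \;\geq\; \mathbb{E}\big[L(a_{MA}(\mathbf{d}),\mathcal{G}^{\text{true}},\boldsymbol{\theta}^{\text{true}})\mid\mathbf{D}=\mathbf{d}\big].
\]
Taking expectations over $\mathbf{D}$ and combining with the previous display yields $R(\delta)\geq R(\delta^{\star})$ for every measurable $\delta$, where $\delta^{\star}(\mathbf{d}):=a_{MA}(\mathbf{d})$. (On the event that $R(\delta)=\infty$ the inequality is trivial, so one may restrict attention to rules of finite risk, which keeps every conditional expectation well defined.)

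Third, I must confirm that $\delta^{\star}$ is itself a legitimate competitor, i.e.\ a measurable map $\mathbf{d}\mapsto\mathcal{A}$; this is where the regularity conditions earn their keep. Since $\mathcal{A}$ is compact (Assumption~\ref{ass:regularity}.1) and $a\mapsto L(a,\mathcal{G}_i,\boldsymbol{\theta}_i)$ is continuous (Assumption~\ref{ass:regularity}.3), the posterior expected loss $g(a,\mathbf{d}):=\sum_i \mathbb{E}[L(a,\mathcal{G}_i,\boldsymbol{\theta}_i)\mid\mathcal{G}_i,\mathbf{d}]\,P(\mathcal{G}_i\mid\mathbf{d})$ is a Carath\'{e}odory function --- continuous in $a$, measurable in $\mathbf{d}$ --- so Berge's maximum theorem guarantees a nonempty compact-valued $\arg\min$, and the Kuratowski--Ryll-Nardzewski measurable selection theorem produces a measurable selector. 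I expect this measurable-selection step, together with the bookkeeping needed to justify Tonelli and the conditional-expectation manipulations (both handled by ``$L$ measurable, bounded below, finite almost surely''), to be the only genuine technical obstacle; once the selector is in hand, the remainder is the textbook ``Bayes optimal $\Rightarrow$ risk optimal under the true prior'' argument.
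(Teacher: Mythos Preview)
Your proposal is correct and follows essentially the same route as the paper: decompose $R(\delta)$ via the law of total expectation into an outer expectation over $\mathbf{D}$ and an inner posterior expected loss, invoke Theorem~\ref{thm:bayes_optimal} pointwise in $\mathbf{D}$, and conclude. You supply considerably more technical scaffolding than the paper does --- the Tonelli justification, the explicit appeal to Assumption~\ref{ass:hierarchical} for prior--truth alignment, and the measurable-selection argument via Berge and Kuratowski--Ryll-Nardzewski --- all of which the paper's proof leaves implicit or omits entirely.
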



\begin{proof}
We evaluate the frequentist risk of a decision rule $\delta(\mathbf{D})$ under the data-generating process implied by the hierarchical model (Assumption~\ref{ass:hierarchical}):
\begin{equation}
R(\delta) = \mathbb{E}_{\mathcal{G}^{\text{true}}, \boldsymbol{\theta}^{\text{true}}, \mathbf{D}}[L(\delta(\mathbf{D}), \mathcal{G}^{\text{true}}, \boldsymbol{\theta}^{\text{true}})].
\end{equation}
Applying the law of total expectation:
\begin{align}
R(\delta) 
&= \mathbb{E}_{\mathbf{D}}\left[ \mathbb{E}_{\mathcal{G}^{\text{true}}, \boldsymbol{\theta}^{\text{true}} \mid \mathbf{D}}[L(\delta(\mathbf{D}), \mathcal{G}^{\text{true}}, \boldsymbol{\theta}^{\text{true}})] \right].
\end{align}
For each realization of $\mathbf{D}$, the inner expectation is minimized by the action $a_{MA}$ that minimizes posterior expected loss (by Theorem~\ref{thm:bayes_optimal}). Since this holds pointwise for all $\mathbf{D}$, it follows that $\delta(\mathbf{D}) = a_{MA}$ minimizes the total frequentist risk $R(\delta)$ over all measurable decision rules.
\end{proof}

The above results demonstrate the theoretical superiority of including structural uncertainty into decision-making that depends on causal effect estimates. Combined with the previous section which outlines when differences between a model averaging and a model selection approach are expected to be large, these results should provide justification for practical attempts to include structural uncertainty into decision making in various applied problems.

\subsection{Conditions Under Which Optimality Fails}

While model averaging is optimal under the idealized assumptions discussed above, practical limitations may lead to suboptimal outcomes and encourage different decisions. 

\begin{proposition}[Suboptimality Under Model Misspecification]
\label{prop:misspecification}
If the true data-generating process does not follow --- or is not properly approximated by --- the hierarchical structure of Assumption~\ref{ass:hierarchical}, model averaging may yield suboptimal decisions. In particular, if the posterior structure probabilities $P(\mathcal{G}_i \mid \mathbf{D})$ are poorly calibrated, a model selection approach may outperform model averaging.
\end{proposition}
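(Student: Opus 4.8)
The plan is to prove this by \emph{construction}: since the statement is an existence claim (``may yield suboptimal decisions''), it suffices to exhibit one decision problem in which Assumption~\ref{ass:hierarchical} is violated, the reported posterior structure weights are miscalibrated, and the frequentist risk of the model-selection rule is strictly below that of the model-averaging rule. The logical backbone is that Theorems~\ref{thm:bayes_optimal} and~\ref{thm:freq_optimal} deliver optimality of $a_{MA}$ \emph{only} through the identity equating the posterior-expected loss (which $a_{MA}$ minimizes by construction) with the inner integrand of the true risk; once the data law is not the assumed hierarchical one, or once the number reported as $P(\mathcal{G}_i \mid \mathbf{D})$ is not the true posterior, that identity breaks and no optimality guarantee survives. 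The construction then shows the break can be made strict.

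First I would fix the instance. Let $\mathcal{G}^{\text{true}} = \mathcal{G}_1$ with a parameter giving marginal effect $E_1(x) = e \neq 0$, recall $E_2(x) = 0$ under $\mathcal{G}_2$, and adopt the quadratic loss of Example~\ref{ex:continuous_intervention}, $L(a, \mathcal{G}_i, \boldsymbol{\theta}_i) = \tfrac12 (E_i(x) - a)^2 + \lambda a^2$ with $\lambda \geq 0$. Direct minimization gives the per-structure optima $a^*_1 = e/(1+2\lambda)$ and $a^*_2 = 0$, so the action gap is $\Delta = |e|/(1+2\lambda) > 0$. Next I would model a miscalibrated structure posterior: suppose the analyst computes $P(\mathcal{G}_i \mid \mathbf{D})$ via a bivariate causal-discovery procedure whose validity rests on functional-form and noise assumptions that the true DGP mildly violates (precisely the situation the paper flags as limiting practical applicability), and posit that on data from this DGP the procedure returns $P(\mathcal{G}_1 \mid \mathbf{D}) = p$ with $p \in (\tfrac12, 1)$ bounded away from $1$ --- direction-correct (its MAP is $\mathcal{G}_1$) but systematically \emph{underconfident}. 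One checks that such a $p$ cannot be the posterior of any hierarchical model of the form in Assumption~\ref{ass:hierarchical} with prior matched to the truth, so the assumption is genuinely violated and the example is non-vacuous.

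I would then compare the two rules. Model selection uses $\hat{\mathcal{G}} = \mathcal{G}_1$, so $a_{MS} = a^*_1$ and incurs the minimal attainable true loss $L(a^*_1, \mathcal{G}_1, \boldsymbol{\theta}^{\text{true}})$. Model averaging minimizes $p\,\tfrac12(e-a)^2 + (1-p)\,\tfrac12 a^2 + \lambda a^2$, giving $a_{MA} = p\, e/(1+2\lambda) = p\, a^*_1 \neq a^*_1$; since $L(\cdot, \mathcal{G}_1, \boldsymbol{\theta}^{\text{true}})$ is strictly convex with unique minimizer $a^*_1$, we get $L(a_{MA}, \mathcal{G}_1, \boldsymbol{\theta}^{\text{true}}) > L(a_{MS}, \mathcal{G}_1, \boldsymbol{\theta}^{\text{true}})$. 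Taking the expectation over $\mathbf{D}$ --- trivially, if $p$ is deterministic given the DGP --- yields $R(\delta_{MS}) < R(\delta_{MA})$, as claimed. As a sanity check, letting $p \to 1$ collapses the gap and recovers the equality predicted by Theorem~\ref{thm:freq_optimal}; more generally, under an arbitrary non-hierarchical data law the object minimized by $a_{MA}$ simply ceases to coincide with the inner integrand of the true risk, so the conclusion extends qualitatively beyond the worked instance.

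I expect the main obstacle to be definitional rather than computational: one must pick a formalization of ``poorly calibrated'' that is simultaneously (i) realizable by actual discovery methods under assumption violations, (ii) provably incompatible with Assumption~\ref{ass:hierarchical}, and (iii) strong enough that the bias does not wash out under the outer expectation over $\mathbf{D}$. The ``direction-correct, underconfident, and deterministic given the DGP'' choice threads all three; a fully stochastic $\mathbf{D}$-dependent version would need the extra argument that $\mathbb{E}_{\mathbf{D}}[p(\mathbf{D})]$ stays bounded away from the well-specified value, which I would confine to a remark rather than carry through in full.
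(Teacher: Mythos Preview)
Your construction is correct and complete as a proof of the existence claim: with the quadratic loss of Example~\ref{ex:continuous_intervention}, a deterministically true $\mathcal{G}_1$, and a direction-correct but underconfident reported posterior $p\in(\tfrac12,1)$, you obtain $a_{MA}=p\,a^*_1\neq a^*_1$ while $a_{MS}=a^*_1$, and strict convexity of $L(\cdot,\mathcal{G}_1,\boldsymbol{\theta}^{\text{true}})$ gives $R(\delta_{MS})<R(\delta_{MA})$. The algebra, the strict-convexity step, and the observation that $p\to 1$ recovers Theorem~\ref{thm:freq_optimal} are all sound.

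Your route differs genuinely from the paper's. The paper does not give a formal proof of this proposition; instead it offers a stylized example (Example~\ref{ex:extreme_loss}) with a \emph{threshold} loss $L=0$ if $|\text{ATE}_i\cdot a|>\tau$ and $L=M$ otherwise. There, with effects of opposite sign and $P(\mathcal{G}_1\mid\mathbf{D})\approx 0.5$, the averaged action is pulled to $a\approx 0$, incurring loss $M$ under both structures, whereas model selection commits and may avoid the loss. The paper's example thus locates the failure in a \emph{non-convexity / thresholding} pathology of the loss, and works even when the posterior is not obviously miscalibrated. Your construction instead keeps the loss smooth and strictly convex and locates the failure squarely in \emph{posterior miscalibration}, which matches the second sentence of the proposition more directly and yields a cleaner strict inequality. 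What the paper's route buys is an illustration that even well-calibrated averaging can be fragile under pathological losses; what your route buys is a rigorous, minimal counterexample isolating the calibration mechanism under the paper's own canonical loss. Both are valid justifications of the stated claim.
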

\noindent
We provide a stylized example of when this could be the case:

\begin{example}[Failure Under Extreme Loss Functions]
\label{ex:extreme_loss}
Consider a loss function with hard thresholds:
\begin{equation*}
    L(a, \mathcal{G}_i, \boldsymbol{\theta}_i) =
    \begin{cases}
        0 & \text{if } |\text{ATE}_i \cdot a| > \tau, \\
        M & \text{otherwise},
    \end{cases}
\end{equation*}
where $M \gg 0$ and $\tau > 0$. If the competing structures imply effects of opposite signs and $P(\mathcal{G}_1 \mid \mathbf{D}) \approx 0.5$, model averaging may yield $a \approx 0$, incurring loss $M$ under both structures. In contrast, model selection commits to one structure and may avoid loss entirely.
\end{example}

\section{Empirical Performance}
\label{sec:empirical}

In this section we detail the design and results of a simulation study, intended to empirically demonstrate the benefits of model averaging. Given that no generic method exists to properly quantify $P(\mathcal{G} \mid \mathbf{D})$, we focus a) on the continuous case and hence the loss function described in Example \ref{ex:continuous_intervention}, and b) we explicitly generate data using non-linearity and heteroskedasticity; exactly the assumptions that can be exploited by causal discovery methods to "orient the causal arrow". 

\subsection{Data Generating Processes (DGPs)}
\label{subsec:dgp}

We consider bivariate data $(X, Y)$ generated from two distinct causal structures: $X \to Y$ (denoted as $\mathcal{G}_1$) and $Y \to X$ (denoted as $\mathcal{G}_2$). The true DGP in each simulation run is conceptually simple: first, a causal structure ($X \to Y$ or $Y \to X$) is selected with probability $\frac{1}{2}$. Next, the following structural causal models are used to generate both nonlinear and heteroskedastic datasets and to vary the magnitude of the causal effect:

In the cases when $X$ causes $Y$, $X$ is drawn from a standard normal distribution, $X \sim \mathcal{N}(0, 1)$. The relationship for $Y$ is then defined based on $X$:
\begin{itemize}
    \item \textbf{Heteroskedastic DGP:} This DGP featured a linear mean relationship but with noise variance dependent on the cause variable, implementing heteroskedasticity:
    \begin{equation*}
    Y = \beta X + \epsilon_Y, \quad \epsilon_Y \sim \mathcal{N}(0, 0.5 + 0.3 |X| )
    \end{equation*}
    \item \textbf{Nonlinear DGP:} This DGP included a quadratic term, implementing a non-linear relationship:
    \begin{equation*}
    Y = \beta X + \gamma X^2 + \epsilon_Y, \quad \epsilon_Y \sim \mathcal{N}(0, 0.3)
    \end{equation*}
\end{itemize}
For both DGPs, two effect sizes (small and large) were implemented:
\begin{itemize}
    \item \textbf{Small Effect Size:} For heteroskedastic, $\beta = 0.5$. For nonlinear, $\beta = 0.5, \gamma = 0.1$.
    \item \textbf{Large Effect Size:} For heteroskedastic, $\beta = 2$. For nonlinear, $\beta = 2, \gamma = 0.5$.
\end{itemize}
The true \emph{marginal} causal effect, $E(x) = dY/dX = \frac{d}{dx} \tau_1(x)$, is derived from these functions. For the heteroskedastic DGP, $E(x) = \beta$ (a constant marginal effect). For the nonlinear DGP, $E(x) = \beta + 2\gamma X$. To relate this marginal causal effect to the ATE introduced in Section \ref{sec:framework}, note that for a given DGP with a continuous treatment, the ATE can be computed by averaging $E(x)$ over the distribution of $X$, i.e., $\mathbb{E}[E(X)]$. If $X$ is a binary variable, the ATE would typically be defined as $\tau(1) - \tau(0)$. This binary ATE corresponds to the integral of the marginal causal effect over the range of the intervention, i.e., $\int_0^1 E(x) dx$. For example, if $E(x)$ is a constant effect (e.g., $E(x) = \beta$), then $\tau(1) - \tau(0) = \int_0^1 \beta dx = \beta$.

In the cases where the true causal arrow is $Y \to X$, we simply use the same data generating mechanism as above with $X$ and $Y$ inverted. However, note that under this causal structure ($\mathcal{G}_2$), the true causal effect of $X$ on $Y$, $E(x)$, is $0$ for all $x$, and consequently, the ATE is also $0$.

\subsection{Causal Discovery Methods}
Two causal discovery methods were employed to infer the direction of causality between $X$ and $Y$: Additive Noise Models (ANM) \citep{peters2014causal} and a Regression method \citep[see, e.g.,][]{zhang2017causal}. In both cases we utilized a bootstrapping approach to compute $P(\mathcal{G} \mid \mathbf{D})$ \citep{efron1994introduction}.

\subsubsection{Additive Noise Models (ANM)}
The ANM-based discovery method leverages the principle that if $X \to Y$ is the true causal direction, then $Y = f(X) + \epsilon_Y$ where $X$ and $\epsilon_Y$ are independent. Conversely, if $Y \to X$ is true, then $X = g(Y) + \epsilon_X$ where $Y$ and $\epsilon_X$ are independent. We implemented the following procedure:
\begin{itemize}
    \item For each direction ($X \to Y$ and $Y \to X$), a Generalized Additive Model (GAM) is fitted (e.g., $Y \sim s(X)$ for $X \to Y$). We used the `gam` function from the `mgcv` package in R for fitting the GAMs.
    \item The residuals from the GAM are then tested for independence with the respective cause variable using both the standard Pearson correlation and a distance correlation (\texttt{energy::dcor} in \texttt{R}). A lower combined score (sum of absolute correlation and distance correlation) indicates stronger independence.
    \item For each bootstrap sample $m$, $m = 100$ in our simulations, the above ANM test was performed. The proportion of bootstrap samples favoring $X \to Y$ (i.e., having a lower score for $Y \sim s(X)$) is taken as the probability $P(\mathcal{G}_1 \mid \mathbf{D})$.
\end{itemize}

\subsubsection{Regression Method (Regression)}
This method assesses the fit and independence of residuals from polynomial regressions in both directions.
\begin{itemize}
    \item For each direction, a second-degree polynomial regression was fitted (e.g., $Y \sim \text{poly}(X, 2)$ for $X \to Y$). We chose a second-degree polynomial to capture potential non-linear relationships, aligning with the nonlinear DGP used in our simulations.
    \item The $R^2$ value of the regression model and the Kendall's $\tau$ correlation between the residuals and the cause variable were computed.
    \item A combined score is calculated as $R^2 - |\text{Kendall's }\tau|$, aiming for high fit and low residual-cause dependence.
    \item Similar to ANM, the probability $P(\mathcal{G}_1 \mid \mathbf{D})$ was computed as the proportion of bootstrap samples where the combined score for $X \to Y$ is higher than for $Y \to X$.
\end{itemize}

\subsection{Causal Effect Estimation}
\label{sec:causal_effect_est}

In each simulation run, once a causal direction is estimated (i.e., $P(\mathcal{G} \mid \mathbf{D})$ is computed as described above), the causal effect of $X$ on $Y$, $E(x) = dY/dX$, was estimated given the data. We implement the following for the two cases $\mathcal{G}_1$ and $\mathcal{G}_2$:
\begin{itemize}
    \item If $\mathcal{G}_1$ (i.e., $X \to Y$):
    \begin{itemize}
        \item For heteroskedastic DGPs, a linear model ($Y \sim X$) is fitted, and the coefficient of $X$, $\beta_x$, is taken as the effect.
        \item For nonlinear DGPs, a GAM ($Y \sim s(X)$) is fitted. The marginal effect $E(x)$ is then numerically approximated by computing the derivative of the fitted GAM at each point $X$. This was achieved using finite differences.
    \end{itemize}
    \item If $\mathcal{G}_2$: The direct causal effect of $X$ on $Y$ is assumed to be $0$ in all cases.
\end{itemize}
The output of this step in the simulations study is a function representing the estimated marginal causal effect of $X$ on $Y$ (or $0$ if $Y \to X$ is assumed), which is used later on to compute the selected action in each simulation run under the two decision strategies of interest. We introduce notation $\hat{E}(x)$ for the estimated causal effect given the data and the selected structure (in the heteroskedastic case $\hat{E}(x) = \beta_x$). Note that we effectively use point-estimates for $\hat{E}(x)$ given the simulation data and thus do not implement a fully Bayesian setup for causal effect estimation within each structure.

\subsection{Decision Making and Loss Computation}
This section details how actions are chosen based on the causal discovery results and the two decision strategies of interest, Model Selection (MS) or Model Averaging (MA), as well as the computation of the loss. In all simulations, we used the following loss function (a special case of Example~\ref{ex:continuous_intervention}) for each simulated individual $i = 1, \dots, n$:
\begin{equation*}
L_i = 0.5 \left(a'(x_i) - a^*(x_i)\right)^2 + \lambda a'(x_i)^2,
\end{equation*}
where $a'(x_i)$ is the action determined by the decision strategy under consideration, and $\lambda \geq 0$ represents the \emph{cost of intervention}. The first term penalizes deviation from the true optimal action $a^*(x_i)$, while the second term discourages overly aggressive actions by making large interventions costly.

For a given data point $x_i$, the true optimal action $a^*(x_i)$ minimizes the loss function under the true causal structure and parameters. As shown in Example~\ref{ex:continuous_intervention} earlier, this is given by:
\begin{equation*}
a^*(x_i) = \frac{E_{\text{true}}(x_i)}{1 + 2\lambda},
\end{equation*}
when the causal direction is $X \to Y$, and $a^*(x_i) = 0$ otherwise. Here, $E_{\text{true}}(x_i)$ is the true marginal causal effect of $X$ on $Y$ at $x_i$ (as discussed in Section~\ref{subsec:dgp}).

\paragraph{Derivation of Optimal Action $a^*(x_i)$:}
The optimal action $a^*(x_i)$ is derived by minimizing the loss function $L(a) = \frac{1}{2}(E_{true}(x_i) - a)^2 + \lambda a^2$ with respect to $a$. To find the minimum, we take the derivative of $L(a)$ with respect to $a$ and set it to zero:

$$ \frac{dL}{da} = \frac{1}{2} \cdot 2 (E_{true}(x_i) - a) \cdot (-1) + 2\lambda a = 0 $$
$$ -(E_{true}(x_i) - a) + 2\lambda a = 0 $$
$$ -E_{true}(x_i) + a + 2\lambda a = 0 $$
$$ a(1 + 2\lambda) = E_{true}(x_i) $$

Solving for $a$ yields the optimal action:
$$ a^*(x_i) = \frac{E_{true}(x_i)}{1 + 2\lambda} $$

The selected action, $a'(x_i)$, given a decision strategy of interest (MS or MA), is computed by substituting the estimated causal effect $\hat{E}(x)$ (its computation described in Section \ref{sec:causal_effect_est} above) for $E(x)$, and given this estimated causal effect, the resulting estimated optimal action is:
\[
\hat{a}^*(x_i) = \frac{\hat{E}(x_i)}{1 + 2\lambda}
\]
To compute $a'(x_i)$ we use:
\begin{itemize}
    \item \textbf{Model Selection (MS):}\\
    If the discovered probability $P(\mathcal{G}_1 \mid \mathbf{D})$ is greater than or equal to $0.5$, the decision-maker selects the $X \to Y$ causal graph ($\mathcal{G}_1$) as the most probable structure. The chosen action $a'_{MS}(x_i)$ is then the optimal action derived assuming $\mathcal{G}_1$ is the true graph. Otherwise, if $P(\mathcal{G}_1 \mid \mathbf{D})$ is less than $0.5$, the $Y \to X$ causal graph ($\mathcal{G}_2$) is selected, and the chosen action $a'_{MS}(x_i)$ is the optimal action derived assuming $\mathcal{G}_2$ is the true graph. Thus:
    \[
    a'_{MS}(x_i) =
    \begin{cases}
        \frac{\hat{E}(x_i)}{1 + 2\lambda} & \text{if } P(X \to Y) \ge 0.5 \\
        0 & \text{if } P(X \to Y) < 0.5
    \end{cases}
    \]

    \item \textbf{Model Averaging (MA):}\\
    The decision-maker computes a weighted average of the optimal actions derived from both hypothesized causal graphs, $\mathcal{G}_1$ and $\mathcal{G}_2$. The weights are the respective discovered probabilities, $P(X \to Y)$ and $P(Y \to X) = 1 - P(X \to Y)$. Thus, the chosen action $a'_{MA}(x_i)$ is:
    \[
    a'_{MA}(x_i) = P(X \to Y) \cdot \frac{\hat{E}(x_i)}{1 + 2\lambda} + (1 - P(X \to Y)) \cdot 0
    \]
\end{itemize}

Given both $a'(x_i)$ and $a^*(x_i)$ the individual level loss is easily computed. The overall loss for a given simulation run is the mean of these individual losses across all $n$ data points. In our reporting below we focus on the difference between the two decision strategies of interest, $\Delta L = L_{MS} - L_{MA}$.

\subsection{Summarizing our setup}
\label{subsec:study_summary}

To summarize, the simulation study systematically varies several parameters to cover a wide range of scenarios. For each unique combination of parameters, $N_{reps} = 100$ replications were performed. The parameters varied are:

\begin{table}[h]
\centering
\begin{tabular}{| l | l |}
\hline
\textbf{Parameter} & \textbf{Values} \\
\hline
Sample Size ($n$) & $10, 50, 100, 500$ \\
DGP Type(s) & `heteroskedastic`, `nonlinear` \\
Effect Size & `large`, `small` \\
Cost of Intervention ($\lambda$) & $0.1, 0.9$ \\
Causal Discovery Method & `ANM`, `Regression` \\
\hline
\end{tabular}
\caption{Simulation Study Parameters}
\end{table}

We compute $\Delta L$ for each scenario, and we separate the two causal discovery methods in our presentation of the obtained results. Note that our parameters relate directly to our theoretical results:
\begin{enumerate}
\item An average $\Delta L > 0$ over all simulations implies generic superiority of MA over MS, empirically supporting Theorems \ref{thm:bayes_optimal} and \ref{thm:freq_optimal}.
\item The sample sizes directly relate to the ability of the causal discovery method to determine $P(\mathcal{G}_1 \mid \mathbf{D})$; thus, aligning with Proposition \ref{prop:1}, we should find that $\Delta L$ is largest for small sample sizes (as in these cases there is more uncertainty regarding the causal direction).
\item The effect sizes relate directly and in a straightforward way to Proposition \ref{prop:2}, where we should see $\Delta L$ being larger for large effect sizes.
\item The intervention costs, represented by $\lambda$, also play an important role in shaping the benefits of model averaging. The loss function becomes more sensitive to deviations from the optimal action as $\lambda$ increases, since its curvature is given by $\kappa = 1 + 2\lambda$. However, higher intervention costs also discourage large actions: the optimal action shrinks as $\lambda$ increases, following $a^*(x_i) = E_{\text{true}}(x_i) / (1 + 2\lambda)$. As a result, both the model selection (MS) and model averaging (MA) strategies tend to recommend smaller actions when $\lambda$ is high. Because these actions are closer to zero, the difference between them—and thus the difference in incurred loss, $\Delta L$—also becomes smaller. This trade-off between increased sensitivity and dampened actions leads to a net decrease in the benefit of model averaging at higher intervention costs. 
\end{enumerate}

\subsection{Results}

We present the results of our simulation study in terms of the average loss difference $\Delta L = L_{MS} - L_{MA}$ between model selection (MS) and model averaging (MA) strategies. A positive value of $\Delta L$ indicates superior performance of model averaging over model selection, providing empirical validation of our theoretical framework.

\subsubsection{Overall Performance Comparison}

Our simulation results provide strong empirical support for the theoretical superiority of model averaging over model selection. Averaged across all 6,400 simulation runs, we find $\Delta L = 0.103$ with a highly significant test statistic ($t = 29.156$, $p < 0.001$ for $H_0: \Delta L = 0$). This substantial positive difference confirms that model averaging consistently outperforms model selection across the range of scenarios considered, directly supporting Theorems \ref{thm:bayes_optimal} and \ref{thm:freq_optimal}.

The performance advantage varies considerably between the two causal discovery methods employed. As shown in Table \ref{tab:discovery-method-stats}, the Additive Noise Models (ANM) method exhibits more pronounced differences between strategies, with $\Delta L = 0.144$ compared to $\Delta L = 0.0618$ for the Bootstrap Regression method. Both differences are highly statistically significant ($p < 0.001$). The larger performance gap observed with ANM likely reflects its greater sensitivity to the specific data generating processes used in our simulations, which were explicitly designed to exploit the non-linearity and heteroskedasticity assumptions that ANM leverages for causal orientation.

Figure \ref{fig:delta_loss} illustrates the distribution of $\Delta L$ values across all simulation runs, stratified by discovery method. The figure clearly demonstrates that while both methods benefit from model averaging, ANM shows both higher mean improvements and greater variability in performance gains. This pattern suggests that the choice of causal discovery method significantly influences the magnitude of benefits achievable through model averaging, with methods that produce more uncertain or variable probability estimates showing greater improvements from averaging.

\begin{table}[htpb!]
\centering
\label{tab:discovery-method-stats}
\begin{tabular}{|l|r|c|c|c|}
\hline
\textbf{Discovery Method} & \textbf{n} & \textbf{Mean $\Delta L$} & \textbf{SD $\Delta L$} & \textbf{SE $\Delta L$} \\
\hline
ANM & 3200 & 0.144 & 0.325 & 0.00574 \\
Regression & 3200 & 0.0618 & 0.226 & 0.00399 \\
\hline
\end{tabular}
\caption{Summary Statistics by Discovery Method}
\end{table}

\begin{figure}[htpb!]
    \centering
    \includegraphics[width=0.8\textwidth]{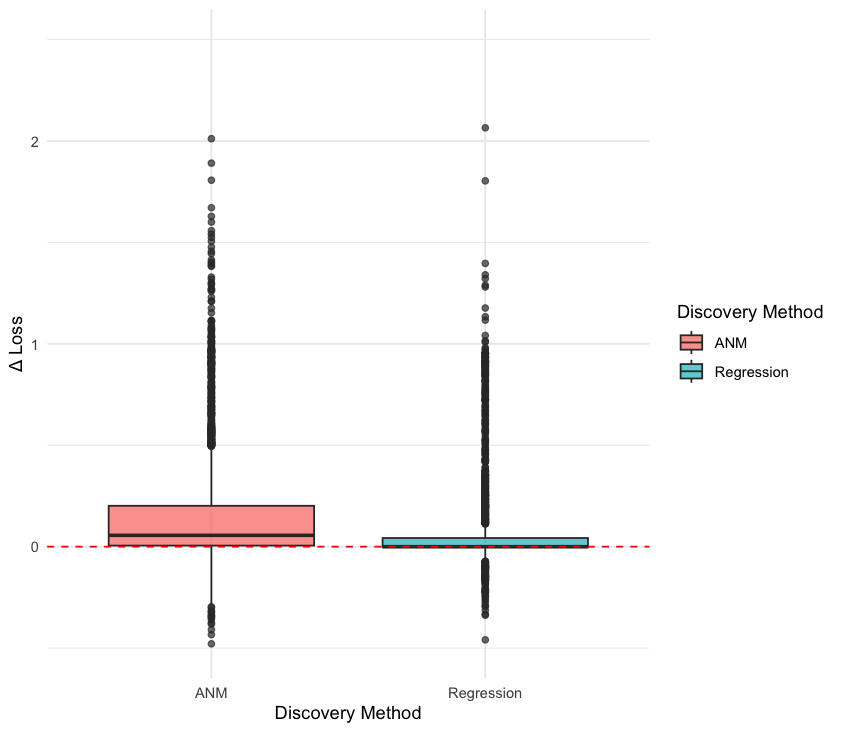}
    \caption{Distribution of loss differences ($\Delta L$) between model selection and model averaging strategies, stratified by causal discovery method. Positive values indicate superior performance of model averaging.}
    \label{fig:delta_loss}
\end{figure}

\subsubsection{Sample Size Effects}

The relationship between sample size and the benefits of model averaging provides direct empirical validation of Proposition \ref{prop:1}. Figure \ref{fig:prop1_n} demonstrates a clear negative relationship between sample size ($n$) and $\Delta L$, confirming our theoretical prediction that model averaging offers the greatest advantages when causal discovery is most uncertain.

As sample sizes increase, the probability estimates $P(\mathcal{G}_1 \mid \mathbf{D})$ produced by both causal discovery methods become increasingly concentrated near the boundaries (0 or 1), reflecting growing confidence in the identified causal direction. This convergence reduces the effective difference between model selection and model averaging strategies, as the averaging weights become increasingly skewed toward a single model. Conversely, at smaller sample sizes where $P(\mathcal{G}_1 \mid \mathbf{D})$ remains closer to intermediate values, model averaging provides substantial hedging benefits against model selection errors.

A simple linear regression of $\Delta L$ on sample size yields a statistically significant negative relationship ($\beta_1 = -0.0001$, $p < 0.001$), quantifying the systematic decrease in model averaging benefits as sample information accumulates. This finding has important practical implications, suggesting that model averaging strategies are particularly valuable in small-sample settings where causal inference is inherently more uncertain.

\begin{figure}[htpb!]
    \centering
    \includegraphics[width=0.8\textwidth]{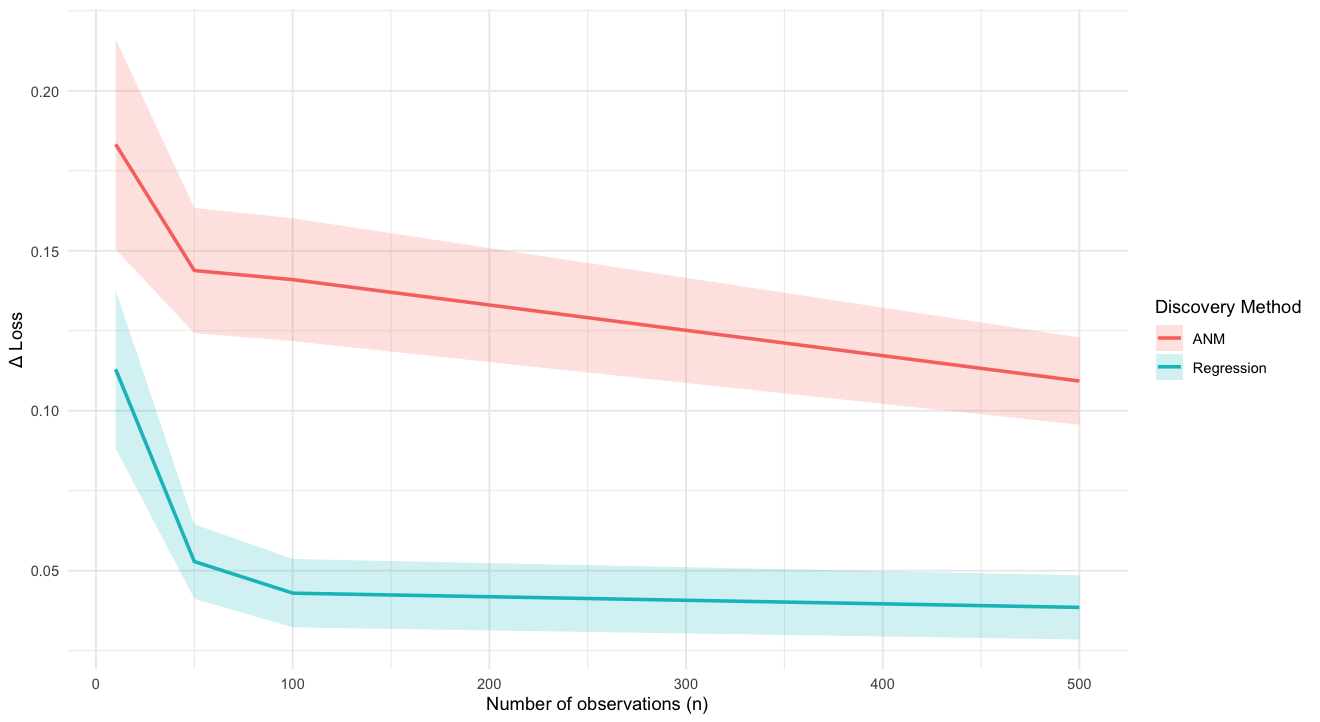}
    \caption{Relationship between sample size and loss difference ($\Delta L$) between model selection and model averaging. The negative trend confirms that model averaging benefits are greatest when sample sizes are small and causal discovery uncertainty is high.}
    \label{fig:prop1_n}
\end{figure}

\subsubsection{Effect Size Dependencies}

The relationship between true causal effect magnitudes and model averaging performance provides mixed support for Proposition \ref{prop:2}. Table \ref{tab:summary-stats} reveals that the effect of causal strength varies substantially between discovery methods, highlighting the complex interplay between data generating mechanisms and causal inference procedures.

For the ANM method, we observe only modest differences between large and small effect sizes ($\Delta L = 0.151$ vs. $0.137$ respectively), with both showing substantial benefits from model averaging. This relatively stable performance across effect sizes suggests that ANM's reliance on independence testing maintains consistent sensitivity to model uncertainty regardless of causal strength.

In contrast, the Bootstrap Regression method exhibits a pronounced effect size dependency, with large effects yielding substantially greater model averaging benefits ($\Delta L = 0.100$) compared to small effects ($\Delta L = 0.0233$). This pattern aligns more closely with Proposition \ref{prop:2}, as stronger causal relationships provide clearer signals that polynomial regression can exploit for directional inference, leading to more reliable probability estimates and, paradoxically, greater benefits when those estimates are averaged rather than selected categorically.

Figure \ref{fig:prop2_eff} illustrates these differential patterns across discovery methods and effect sizes. The interaction between method choice and effect magnitude underscores the importance of considering both the data generating process and the discovery algorithm when evaluating model averaging strategies.

\begin{table}[htpb!]
\centering
\label{tab:summary-stats}
\begin{tabular}{|l|l|r|r|r|r|}
\hline
\textbf{Effect Size} & \textbf{Discovery Method} & \textbf{n} & \textbf{Mean $\Delta L$} & \textbf{SD $\Delta L$} & \textbf{SE $\Delta L$} \\
\hline
Large & ANM & 1600 & 0.151 & 0.315 & 0.00788 \\
Large & Regression & 1600 & 0.100 & 0.247 & 0.00618 \\
Small & ANM & 1600 & 0.137 & 0.335 & 0.00836 \\
Small & Regression & 1600 & 0.0233 & 0.195 & 0.00488 \\
\hline
\end{tabular}
\caption{Summary Statistics by Effect Size and Discovery Method}
\end{table}

\begin{figure}[htpb!]
    \centering
    \includegraphics[width=0.8\textwidth]{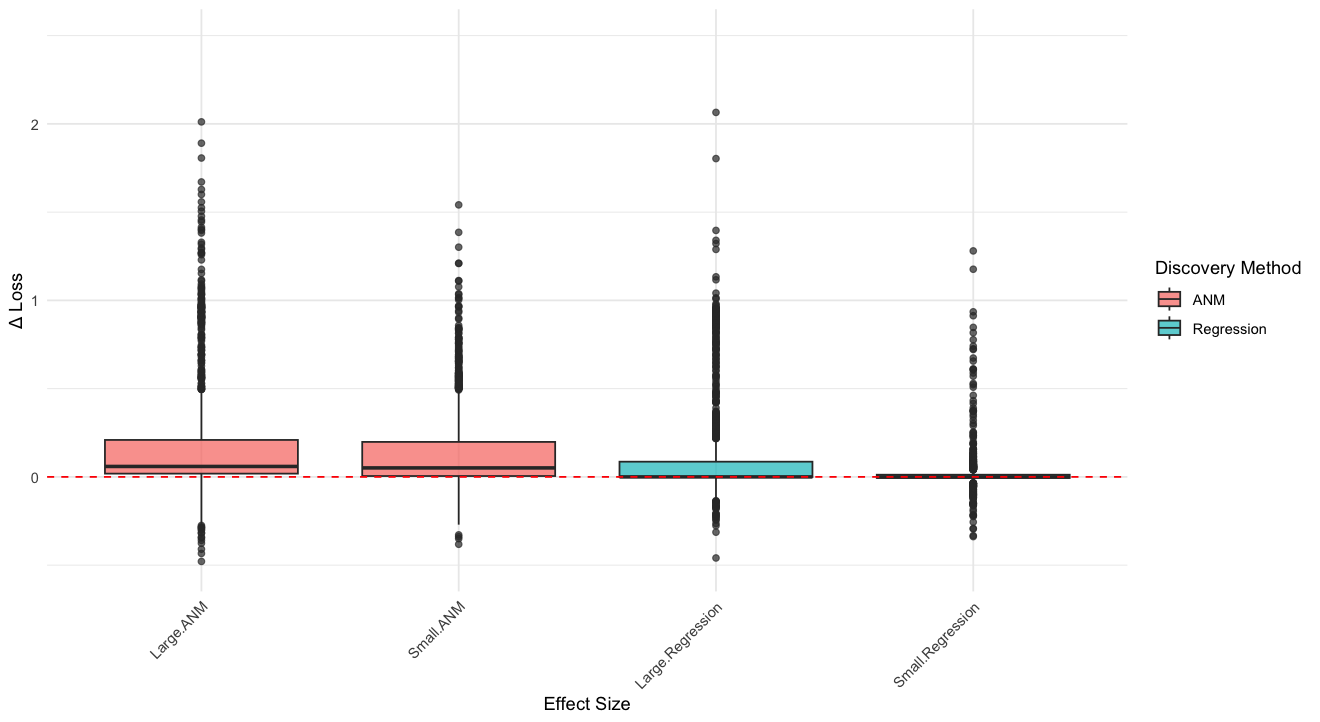}
    \caption{Model averaging benefits ($\Delta L$) by effect size and discovery method. The Bootstrap Regression method shows greater sensitivity to effect magnitude than ANM.}
    \label{fig:prop2_eff}
\end{figure}

\subsubsection{Intervention Cost Effects}

The influence of intervention costs ($\lambda$) on model averaging performance reveals the economic trade-offs inherent in causal decision-making under uncertainty. As anticipated from our simulations setup and choice of loss function, higher intervention costs reduce the absolute benefits of model averaging. Table \ref{tab:lambda-discovery-stats} demonstrates this pattern clearly: for both discovery methods, $\Delta L$ decreases  as $\lambda$ increases from 0.1 to 0.9. Under low intervention costs ($\lambda = 0.1$), ANM yields $\Delta L = 0.186$ while Bootstrap Regression produces $\Delta L = 0.0828$. These benefits are approximately halved under high intervention costs ($\lambda = 0.9$), with corresponding values of $0.103$ and $0.0409$ as expected given the mechansim described in the Section \ref{subsec:study_summary}.


\begin{table}[htpb!]
\centering
\label{tab:lambda-discovery-stats}
\begin{tabular}{|c|l|r|c|c|c|}
\hline
\textbf{$\lambda$} & \textbf{Discovery Method} & \textbf{n} & \textbf{Mean $\Delta L$} & \textbf{SD $\Delta L$} & \textbf{SE $\Delta L$} \\
\hline
0.1 & ANM & 1600 & 0.186 & 0.370 & 0.00926 \\
0.1 & Regression & 1600 & 0.0828 & 0.302 & 0.00755 \\
0.9 & ANM & 1600 & 0.103 & 0.266 & 0.00664 \\
0.9 & Regression & 1600 & 0.0409 & 0.101 & 0.00251 \\
\hline
\end{tabular}
\caption{Summary Statistics by Lambda and Discovery Method}
\end{table}

Figure \ref{fig:prop3_dep} visualizes the relationships across the parameter space, showing that while intervention costs moderate the absolute benefits of model averaging, they do not eliminate its fundamental advantages. This finding has important implications for practical applications, suggesting that model averaging strategies provide value across a range of economic environments, with the greatest absolute benefits realized in settings where intervention costs are relatively low.

\begin{figure}[htpb!]
    \centering
    \includegraphics[width=0.8\textwidth]{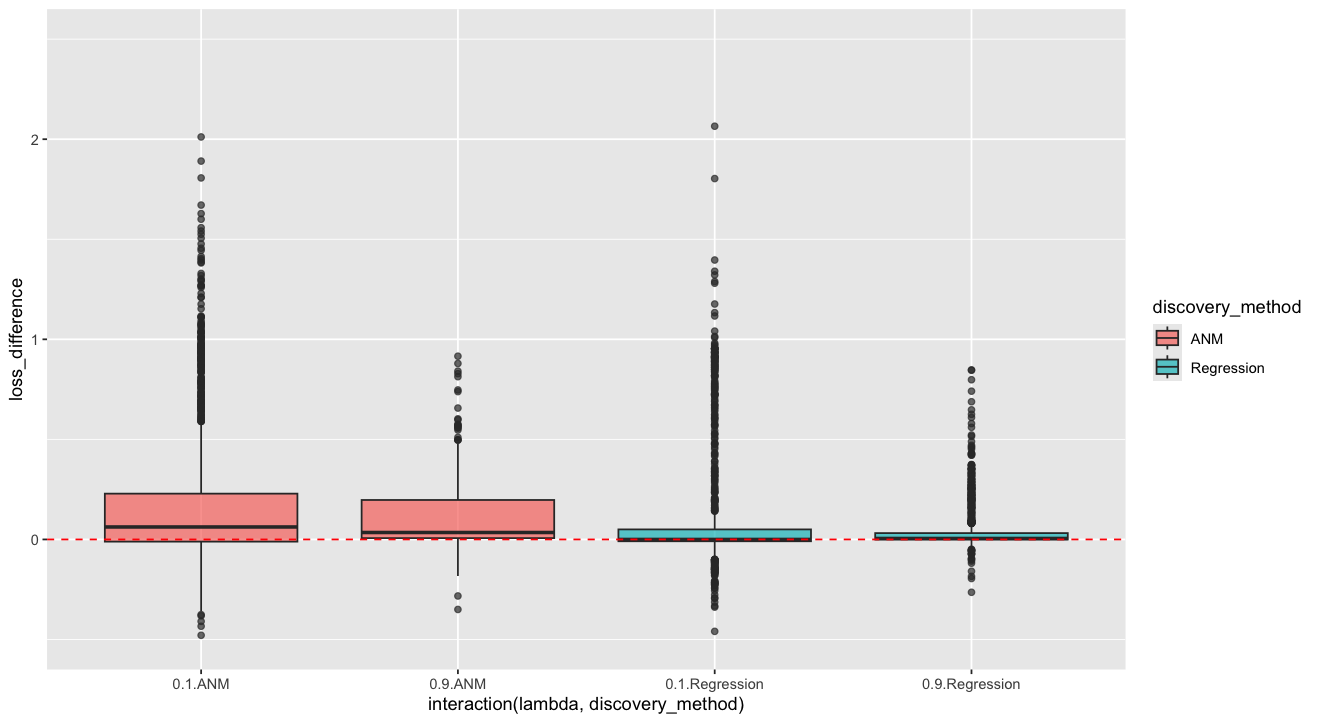}
    \caption{Impact of intervention costs ($\lambda$) on model averaging benefits ($\Delta L$) across discovery methods. Higher intervention costs reduce absolute benefits but preserve relative advantages of model averaging.}
    \label{fig:prop3_dep}
\end{figure}

\section{Extending our framework}
\label{sec:extending_framework}

Our analysis has established a theoretical foundation and provided empirical evidence for the benefits of Bayesian model averaging over causal structures, particularly in the bivariate setting. However, extending this framework to more complex, real-world scenarios requires addressing several key challenges. This section outlines the primary limitations of our current framework and proposes a research agenda for its practical extension to a generic set of variables with an unknown causal structure and a given loss function.

\subsection{Generic Computation of $P(\mathcal{G}|D)$}

A cornerstone of our model averaging framework is the posterior probability of a causal structure given the data, $P(\mathcal{G}|D)$. While our simulations in the bivariate case leveraged specific causal discovery methods (ANM and Regression based) combined with bootstrapping to approximate these probabilities, a general, robust, and computationally efficient method for computing $P(\mathcal{G}|D)$ remains a significant open challenge.

In the bivariate setting, even for simple $X \leftrightarrow Y$ relationships, accurately quantifying $P(\mathcal{G}|D)$ is non-trivial. Methods like ANM rely on strong assumptions about functional forms and noise distributions, which may not hold in practice. Furthermore, the bootstrapping approach, while providing a heuristic for uncertainty, is not a direct Bayesian posterior. Future work therefore needs to develop more principled Bayesian causal discovery methods that directly output posterior probabilities over a set of candidate DAGs. This could involve developing more robust and assumption-lean methods for bivariate causal discovery that can directly provide Bayesian posteriors over $X \rightarrow Y$ vs. $Y \rightarrow X$, potentially through non-parametric Bayesian approaches or methods that explicitly model (latent) confounders. 

For multivariate settings, extending the principles of constraint-based causal discovery (e.g., PC, FCI algorithms) to a Bayesian context could be promising. Instead of yielding a single Markov equivalence class, these methods could provide a posterior distribution over such classes, or directly over DAGs within those classes, by quantifying uncertainty in conditional independence tests. This would necessitate developing Bayesian hypothesis tests for conditional independence that yield posterior probabilities rather than binary decisions. Moreover, in many practical scenarios, strong domain expertise exists. Incorporating subjective prior beliefs about causal relationships (e.g., certain edges are highly improbable, or certain variables are known causes/effects) could significantly constrain the search space and improve the calibration of $P(\mathcal{G}|D)$. This necessitates developing elicitation methods for structural priors and integrating them seamlessly into Bayesian causal discovery algorithms. Such methods would be particularly valuable when observational data is sparse or noisy, allowing domain knowledge to compensate for data limitations. 

The development of methods that provide well-calibrated $P(\mathcal{G}|D)$ values, especially for arbitrary sets of variables, is perhaps the most critical research direction for the practical adoption of causal model averaging.

\subsection{Computational Complexity}

The computational complexity of Bayesian model averaging over causal structures grows rapidly with the number of variables. The number of possible DAGs on $p$ variables is super-exponential in $p$, making exhaustive enumeration and averaging computationally prohibitive for even moderately sized systems. This combinatorial explosion necessitates strategies to manage the search space and inference. A primary approach to mitigate this complexity is to intelligently prune the set of candidate DAGs. This can be achieved by focusing the search on relevant parent sets of the intervention variable and the outcome variable if the decision problem involves interventions on a specific variable or a small set of variables. Strong prior beliefs can also be used to assign zero probability to certain DAGs, effectively eliminating them from the search space, though this requires careful elicitation of expert knowledge. Alternatively, employing score-based or constraint-based heuristics to identify a small set of "most probable" or "high-scoring" DAGs, and then performing model averaging only over this restricted set, introduces a trade-off between computational tractability and theoretical optimality (as the true DAG might be excluded), but may be a necessary practical compromise. 

Beyond restricting the search space, developing efficient averaging algorithms that can perform the averaging without explicitly enumerating all DAGs is crucial. This could involve Markov Chain Monte Carlo (MCMC) methods that sample from the posterior distribution of DAGs, or variational inference approaches that approximate this posterior. For very large systems, a modular approach to causal discovery might be feasible, decomposing the problem into smaller, more manageable sub-problems, learning local causal structures around variables of interest, and then combining these local insights for decision-making. Addressing computational complexity is paramount for moving beyond toy examples and applying causal model averaging to real-world, high-dimensional decision problems.

\subsection{Robustness and Practical Applicability}

Our theoretical results demonstrate that model averaging is beneficial under specific conditions, particularly when structural uncertainty is moderate, causal effects differ substantially between structures, and loss functions are sensitive to these differences. However, the practical application of these methods requires careful consideration of various factors influencing their robustness and utility, forming a crucial part of the research agenda. First, the quality of the model averaging strategy critically depends on the accuracy and calibration of the estimated $P(\mathcal{G}|D)$. If causal discovery methods provide poorly calibrated probabilities (e.g., overconfident estimates), model averaging might perform worse than a robust model selection strategy. Research is therefore needed to develop diagnostic tools for assessing the calibration of structural uncertainty estimates and to create methods that yield more reliable uncertainty quantification. Second, while our framework is most relevant for observational data where structural uncertainty is inherent, even in experimental settings (e.g., randomized controlled trials), structural uncertainty might arise for other variables or when combining experimental and observational data. Future work should explore hybrid approaches that leverage experimental insights while accounting for residual structural uncertainty. 

Practically, decision-makers need guidance on how to assess these properties for their specific loss functions. This involves developing tools to perform sensitivity analysis on the loss function with respect to variations in causal effects, helping decision-makers understand when structural uncertainty is likely to have a significant impact. Furthermore, for various common decision problems (e.g., policy interventions, medical treatments), characterizing the typical range of $\Delta$ (difference in optimal actions) between competing causal structures would provide practical benchmarks for assessing the potential benefits of model averaging. Ultimately, the goal is to develop a comprehensive framework that allows practitioners to quantify and incorporate structural uncertainty into decision-making for any potential set of variables, with a given loss function, and an unknown causal structure. 

\section{Discussion}
\label{sec:discussion}

This paper has explored the critical, yet often overlooked, role of structural uncertainty in causal decision-making, particularly focusing on bivariate relationships. We have demonstrated both theoretically and empirically that explicitly accounting for uncertainty about the underlying causal graph through Bayesian model averaging can lead to superior decision quality compared to conventional model selection approaches. Our theoretical framework precisely characterizes the conditions under which structural uncertainty becomes decision-relevant: when uncertainty is moderate, when optimal actions differ substantially across competing structures, and when loss functions are sufficiently sensitive to these differences. 

Our contributions are multi-faceted. We have formalized a decision-theoretic framework for incorporating structural uncertainty, establishing the Bayesian and frequentist optimality of model averaging under well-specified hierarchical models. Through extensive simulations using both heteroskedastic and nonlinear data generating processes, we have empirically validated these theoretical predictions, showing consistent benefits of model averaging, especially in small-sample settings and when causal effects are pronounced. Furthermore, we demonstrated how modern bivariate causal discovery methods, despite their inherent assumptions, can be practically leveraged to quantify structural uncertainty and improve decision outcomes. 

While our analysis provides significant insights, it also highlights substantial avenues for future research, as detailed in Section \ref{sec:extending_framework}. The primary challenge lies in developing generic, robust, and computationally efficient methods for quantifying $P(\mathcal{G}|D)$ in multivariate settings. Additionally, addressing the super-exponential growth in the number of candidate DAGs will necessitate intelligent pruning strategies and advanced averaging algorithms. Ultimately, the goal is to develop a comprehensive and practically applicable framework for incorporating structural uncertainty into causal decision-making across diverse domains, ensuring that decisions are robust to the fundamental ambiguity of causal structure.

\section{Author statements}

\textbf{Author Contributions:} The ideas, manuscript, derivations, and simulations have all been carried out by the first author. During the preparation of this work the author(s) used several commercially available LLMs (including ChatGPT, Gemini, and Claude) in order to refine writing clarity, explore related literature, and verify proof concepts. After using these tools, the author(s) reviewed and edited the content as needed and take(s) full responsibility for the content of the published article.

\textbf{Conflict of interest statement:} There are no conflicts of interest to declare.

\textbf{Funding information:} No explicit funding source was used for this work / the work is not tied to a specific grant.

\textbf{Data availability statement:} No empirical data was used in this paper. The [R] code for the simulations will be made available on GitHub.

\textbf{Acknowledgments:} We would like to thank all the members of the TRI-DSA group at the University of Eindhoven for their encouragements and insightful comments.

\bibliographystyle{unsrtnat}
\bibliography{ref_cleaned}

\end{document}